\documentclass[11pt]{article}

\pdfoutput=1

\usepackage[numbers]{natbib}

\usepackage[T1]{fontenc}
\usepackage[latin9]{inputenc}
\usepackage{float}
\usepackage{fullpage,times,color}
\usepackage{boxedminipage}

\usepackage{amsmath,amsthm,amssymb}
\usepackage{mathtools}

\usepackage{graphicx}
\usepackage{ltxtable} 
\newcolumntype{L}{>{\centering\arraybackslash} m{0.04\columnwidth}} 
\newcolumntype{R}{>{\centering\arraybackslash} m{0.48\columnwidth}} 
\newcolumntype{S}{>{\centering\arraybackslash} m{0.32\columnwidth}} 

\usepackage{multirow}
\usepackage{array}

\usepackage{tikz}
\usepackage{pgfplots}

\newcommand{\cH}{{\mathcal H}}

\newcommand{\sign}{{\mathrm {sign}}}

\newtheorem{lemma}{Lemma}

\newtheorem{theorem}{Theorem}
\newtheorem{corollary}{Corollary}



\DeclareMathOperator*{\argmin}{argmin} 
 
\newcommand{\reals}{\mathbb{R}}

\newenvironment{myalgo}[1]%
{
\begin{center}
\begin{boxedminipage}{0.8\linewidth}
\begin{center}
\textbf{\texttt{#1}}
\end{center}
\rm
\begin{tabbing}
....\=...\=...\=...\=...\=  \+ \kill
} %
{\end{tabbing} 
\end{boxedminipage} \end{center} 
}

\title{SelfieBoost: A Boosting Algorithm for Deep Learning}
\author{Shai Shalev-Shwartz\thanks{School of Computer Science and
    Engineering, The Hebrew University, Jerusalem, Israel. 
This research is supported by Intel (ICRI-CI).
}  }
\date{}

\begin{document}

\maketitle

\begin{abstract}
  We describe and analyze a new boosting algorithm for deep learning
  called SelfieBoost. Unlike other boosting algorithms, like AdaBoost,
  which construct ensembles of classifiers, SelfieBoost boosts the
  accuracy of a single network. We prove a $\log(1/\epsilon)$
  convergence rate for SelfieBoost under some ``SGD success''
  assumption which seems to hold in practice.
\end{abstract}

\section{Introduction}

Deep learning, which involves training artificial neural networks with
many layers, becomes one of the most significant recent developments
in machine learning. Deep learning have shown very impressive
practical performance on a variety of domains
(e.g. \cite{lecun1995convolutional,HiOsTe06,ranzato2007unsupervised,bengio2007scaling,ColWes08,lee2009convolutional,LRMDCCDN12,krizhevsky2012imagenet,zeiler2013visualizing,dahl2013dropout,bengio2013representation}).

One of the most successful approaches for training neural networks is
Stochastic Gradient Descent (SGD) and its variants (see for example
\cite{lecun1998gradient,bousquet2008tradeoffs,Beng09,lecun2012efficient,sutskever2013importance,sutskever2013training,SSSSBD14}).
The two main advantages of SGD are the constant cost of each iteration
(which does not depend on the number of examples) and the ability to
overcome local minima. However, a major disadvantage of SGD is its
slow convergence. 

There have been several attempts to speed up the convergence rate of
SGD, such as momentum \cite{sutskever2013importance}, second order
information \cite{martens2010deep,duchi2011adaptive}, and variance
reducing methods
\cite{shalev2013stochastic,schmidt2013minimizing,johnson2013accelerating}. 

Another natural approach is to use SGD as a weak learner and apply a
boosting algorithm such as AdaBoost \cite{FreundSc95}. See for example
\cite{schwenk2000boosting}. The celebrated analysis of AdaBoost
guarantees that if at each boosting iteration the weak learner manages
to produce a classifier which is slightly better than a random guess
than after $O(\log(1/\epsilon))$ iterations, AdaBoost returns an
ensemble of classifiers whose training error is at most $\epsilon$. 

However, a major disadvantage of AdaBoost is that it outputs an
ensemble of classifiers. In the context of deep learning, this means
that at prediction time, we need to apply several neural networks on
every example to make the prediction. This leads to a rather slow
predictor. 

In this paper we present the SelfieBoost algorithm, which boosts the
performance of a single network. SelfieBoost can be applied with SGD
as its weak learner, and we prove that its convergence rate is 
$O(\log(1/\epsilon))$ provided that SGD manages to find a constant
accuracy solution at each boosting iteration. 

In a follow-up paper \cite{shalev2016minimizing}, we describe a different boosting
algorithm for deep learning. 

\section{The SelfieBoost Algorithm}

We now describe the SelfieBoost algorithm. We focus on a binary
classification problem in the realizable case. Let $(x_1,y_1),\ldots,(x_m,y_m)$ be the
training set, with $x_i \in \reals^d$ and $y_i \in \{\pm 1\}$. 
Let $\cH$ be the class of all functions that can be implemented by a
neural network of a certain architecture. 
%
%
We assume that there exists a network $f^* \in
\cH$ such that $y_i f^*(x_i) \ge 1$ for all $i \in [m]$.
Similarly to the AdaBoost algorithm, we maintain weights over the
examples that are proportional to minus the signed margin, $y_i
f_t(x_i)$. This focuses the algorithm on the hard cases. Focusing the learner on the mistakes of $f_t$ might cause
$f_{t+1}$ to work well on these examples but deteriorate on the
examples on which $f_t$ performs well. AdaBoost overcomes this problem
by remembering all the intermediate classifiers and predicting a
weighted majority of their predictions. In contrast, SelfieBoost
forgets the intermediate classifiers and outputs just the last
classifier. We therefore need another method to make sure that the
performance does not deteriorate. SelfieBoost achieves this goal by
regularizing $f_{t+1}$ so that its predictions will not be too far
from the predictions of $f_t$. 

\begin{myalgo}{SelfieBoost }
\textbf{Parameters:} Edge parameter, $\rho \in (0,\tfrac{1}{4})$, and number of iterations
$T$ \\
\textbf{Initialization:} Start with an initial network $f_1$ (e.g. by
running few SGD iterations) \\
\textbf{for} $t=1,\ldots,T$ \+ \\
define weights over the $m$ examples according to $D_i \propto
  e^{-y_i f_t(x_i)}$ \\
let $S$ be $n$ indices chosen at random according to the distribution
$D$ \\
use SGD for approximately finding a network as follows: \\
\begin{minipage}{0.9\textwidth}
\begin{equation} \label{eqn:obj} 
f_{t+1} \approx \argmin_{g}  \sum_{i \in S} y_i (f_t(x_i) - g(x_i)) +
\frac{1}{2} \sum_{i \in S} (g(x_i)-f_t(x_i))^2 
\end{equation} 
\end{minipage}
\\
\textbf{if} $\sum_{i=1}^m D_i \left[y_i(f_t(x_i)- f_{t+1}(x_i)) +
\frac{1}{2} (f_{t+1}(x_i)-f_t(x_i))^2\right]  < -\rho$ and $\forall i,
|f_{t+1}(x_i)-f_t(x_i)| \le 1$ \+ \\
continue to next iteration \- \\
\textbf{else} \+ \\
increase the number of SGD iterations and/or the architecture and try again
 \\
break if no such $f_{t+1}$ found 
\end{myalgo}

\section{Analysis}
For any classifier, define 
\[
M(f) = |\{i \in [m] : y_i f(x_i) \le 0\}| ~,
\]
to be the number of mistakes the classifier makes on the training
examples and 
\[
\textrm{err}(f) = \frac{M(f)}{m} 
\]
to be the error rate.

Our first theorem bounds $\textrm{err}(f)$ using the number of SelfieBoost
iterations and the edge parameter $\rho$.
\begin{theorem} \label{thm:main}
Suppose that SelfieBoost (for either the realizable or unrealizable
cases) is run for $T$ iterations with an edge parameter $\rho$. Then,
the number of mistakes of $f_{T+1}$ is bounded by 
\[
\textrm{err}(f_{T+1}) ~\le~ e^{-\rho\,T} ~.
\]
In other words, for any $\epsilon > 0$, if SelfieBoost performs 
\[
T \ge \frac{\log(1/\epsilon)}{\rho}
\]
succesful iterations then we must have $\textrm{err}(f_{T+1}) \le \epsilon$.
\end{theorem}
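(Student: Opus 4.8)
The plan is to track the potential $Z_t = \sum_{i=1}^m e^{-y_i f_t(x_i)}$, which is exactly the normalization constant of the distribution $D$, so that $D_i = e^{-y_i f_t(x_i)}/Z_t$. The first observation is that $Z_t$ dominates the mistake count: whenever $y_i f_t(x_i) \le 0$ the corresponding term satisfies $e^{-y_i f_t(x_i)} \ge 1$, hence $M(f_t) \le Z_t$ and therefore $\textrm{err}(f_t) \le Z_t/m$. It thus suffices to prove that $Z_t$ contracts geometrically, $Z_{t+1} \le e^{-\rho} Z_t$, and that the initialization satisfies $Z_1 \le m$; chaining these over the $T$ accepted iterations yields $Z_{T+1} \le e^{-\rho T} m$ and the theorem.

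For the per-iteration contraction I would write, with $\delta_i := f_{t+1}(x_i) - f_t(x_i)$,
\[
\frac{Z_{t+1}}{Z_t} = \sum_{i=1}^m \frac{e^{-y_i f_t(x_i)}}{Z_t}\, e^{-y_i \delta_i} = \sum_{i=1}^m D_i\, e^{-y_i \delta_i}.
\]
The key analytic ingredient is the elementary inequality $e^{-z} \le 1 - z + \thalf z^2$, which I would invoke for each $z = y_i \delta_i$; the algorithm's margin constraint $y_i \delta_i \le 1$ is what keeps these arguments in the regime where such a quadratic upper bound can legitimately be applied. Applying it termwise gives
\[
\frac{Z_{t+1}}{Z_t} \le \sum_{i=1}^m D_i \Bigl( 1 - y_i \delta_i + \thalf \delta_i^2 \Bigr) = 1 + \sum_{i=1}^m D_i\Bigl( y_i(f_t(x_i)-f_{t+1}(x_i)) + \thalf \delta_i^2\Bigr).
\]
The acceptance test of SelfieBoost states exactly that the sum on the right is below $-\rho$, so $Z_{t+1}/Z_t \le 1 - \rho \le e^{-\rho}$, as required.

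Assembling the pieces, since every one of the $T$ iterations passed the acceptance test I can iterate the contraction to obtain $Z_{T+1} \le e^{-\rho T} Z_1 \le e^{-\rho T} m$, and dividing $M(f_{T+1}) \le Z_{T+1}$ by $m$ gives $\textrm{err}(f_{T+1}) \le e^{-\rho T}$. The final sentence is then immediate: if $T \ge \log(1/\epsilon)/\rho$ then $e^{-\rho T} \le \epsilon$.

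I expect the main obstacle to be justifying the termwise exponential inequality over the correct domain: the bound $e^{-z} \le 1 - z + \thalf z^2$ must be checked against the range actually permitted by the constraint $y_i \delta_i \le 1$, and one must confirm that no term with a very negative $y_i \delta_i$ (where $e^{-z}$ grows without any quadratic ceiling) escapes the bound — this is the one place where the per-example cap on the margin change is genuinely used, and it is worth verifying whether the one-sided constraint as stated is sufficient or whether a two-sided control is implicitly required. A secondary point to pin down is the base case $Z_1 \le m$, which holds for the natural initialization (e.g.\ $f_1 \equiv 0$) but should be recorded as a hypothesis on $f_1$; note that realizability of $f^*$ is not needed for the contraction itself, only to guarantee in a separate argument that a network meeting the edge condition exists so that the iterations do not break early.
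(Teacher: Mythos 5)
Your overall strategy is the same as the paper's: the paper's potential is $L(f)=\log\sum_i e^{-y_i f(x_i)}$, which is exactly $\log Z_t$, and the paper likewise proves a per-round decrease of $\rho$, starts from $L(f_1)\le\log m$, and finishes with $\log M(f)\le L(f)$. The only difference is how the per-round contraction is justified, and that is where your argument has a genuine gap. The termwise inequality $e^{-z}\le 1-z+\tfrac{1}{2}z^2$ holds if and only if $z\ge 0$ (the difference of the two sides is $-z^3/6+O(z^4)$ near the origin and is monotone in $z$), whereas the algorithm's constraint bounds $z=y_i\delta_i$ from the \emph{wrong} side: $y_i\delta_i\le 1$ caps how much a margin may improve, not how much it may deteriorate. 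On any example with $y_i\delta_i<0$ your termwise bound is false, so the step $\sum_i D_i e^{-y_i\delta_i}\le 1+\sum_i D_i\bigl(-y_i\delta_i+\tfrac{1}{2}\delta_i^2\bigr)$ does not follow; e.g.\ with $D_1=e^{-10}$, $y_1\delta_1=-10$ and all other $y_i\delta_i=0$, the left side is about $2$ while the right side is about $1+60e^{-10}$. Your closing paragraph correctly identifies this as the crux, but it is left unresolved, so the central step of the proof is asserted rather than proved. Worse, the gap is not repairable from the stated hypotheses: taking most of the mass on examples with $y_i\delta_i=1$ and a single example of weight $10e^{-A}$ with $y_i\delta_i=-A$ for large $A$ satisfies both acceptance conditions (the weighted objective tends to $-\tfrac{1}{2}$) while $Z_{t+1}/Z_t\to e^{-1}+10>1$, so no argument can deliver $Z_{t+1}\le e^{-\rho}Z_t$ from the one-sided constraint alone.

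For completeness, you have essentially reproduced the paper's proof together with its weak point rather than introduced a new one: the paper routes the same step through a cited second-order bound on $\log\sum_i e^{\lambda_i}$ under the hypothesis $\theta_i-\lambda_i\le 1$, which translates to exactly the same one-sided condition $y_i\delta_i\le 1$, and the aggregate inequality fails under that hypothesis for the same reason (a small-weight coordinate with $\lambda_i-\theta_i$ large and positive). A termwise bound of the required shape does exist when the increments are controlled on the other side --- $e^{-z}\le 1-z+z^2$ for $z\ge -1$ --- but this needs $y_i\delta_i\ge -1$ (a condition the algorithm does not check) and changes the constant multiplying the quadratic term, hence the form of the acceptance test. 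So your instinct that a two-sided control on $y_i\delta_i$ is implicitly required is exactly right; as written, your proof (like the paper's) does not establish the contraction.
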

\begin{proof}
Define 
\[
L(f) = \log\left( \sum_i \exp(-y_i f(x_i)) \right) 
\]
and observe that $\log(M(f)) \le L(f)$ since for every $i$, $1[y_i
f(x_i) \le 0] \le \exp(-y_i f(x_i))$. 

Suppose that we start with $f_1$ such that $L(f_1) \le \log(m)$. For
example, we can take $f_1 \equiv 0$. At each succesful iteration of the
algorithm, we find $f_{t+1}$ such that $\forall i,
|f_{t+1}(x_i)-f_t(x_i)| \le 1$ and 
\begin{equation} \label{eqn:upperBoundinproof}
\sum_{i=1}^m D_i \left[y_i(f_t(x_i)- f_{t+1}(x_i)) +
\frac{1}{2} (f_{t+1}(x_i)-f_t(x_i))^2\right]  < -\rho ~.
\end{equation}
We will show that for such $f_{t+1}$ we have that $L(f_{t+1}) - L(f_t)
< -\rho$, which implies that after $T$ rounds we'll have $L(f_{T+1})
\le L(f_1) - \rho T \le \log(m) - \rho T$. But, we also have that
$L(f_{T+1}) \ge \log(M(f_{T+1}))$, hence
\[
\log( M(f_{T+1})) \le \log(m) - \rho T ~~~\Rightarrow~~~
\textrm{err}(f_{T+1})  = \frac{ M(f_{T+1})}{m} ~\le~ e^{-\rho T} ~,
\] 
as required. 

It is left to show that \eqref{eqn:upperBoundinproof} indeed upper
bounds $L(f_{t+1}) - L(f_t)$. To see this, we rely on the following
bound, which holds for every vectors $\theta,\lambda$ for which
$\theta_i - \lambda_i \le 1$ for all $i$ (see for example the proof of
Theorem 2.22 in \cite{shalev2011online}):
\[
\log(\sum_i e^{\lambda_i}) \le \log(\sum_i e^{\theta_i}) + \sum_i
\frac{e^{\theta_i}}{Z} (\lambda_i - \theta_i) + \frac{1}{2} \sum_i
\frac{e^{\theta_i}}{Z} (\lambda_i - \theta_i)^2 ~,
\]
where $Z = \sum_i e^{\theta_i}$. Therefore, with $\lambda_i =  -y_i
f_{t+1}(x_i)$, $\theta_i =  -y_i f_t(x_i)$, and $D_i = e^{-y_i f_t(x_i)}/Z$, we have
\begin{align} \nonumber
L(f_{t+1}) - L(f_t) &= \log(\sum_i e^{-y_i f_{t+1}(x_i)}) - \log(\sum_i
e^{-y_i f_t(x_i)})\\ \nonumber
&\le  \sum_{i} D_i \,y_i\, (f_t(x_i)-f_{t+1}(x_i)) + \frac{1}{2} \sum_{i} D_i   (f_t(x_i)-f_{t+1}(x_i))^2 ~. 
\end{align}
This concludes our proof. 
\end{proof}

Note that we obtain behavior which is very similar to AdaBoost, in
which the number of iterations depends on an ``edge'' --- here the
``edge" is how good we manage to minimize the objective at each
iteration, which corresponds to a ``regularized edge''. 

So far, we have shown that if SelfieBoost performs enough iterations
then it will produce an accurate classifier. Next, we show that it is
indeed possible to find a network with a small value of
\eqref{eqn:obj}. 
The key lemma below shows that SelfieBoost can progress. 
\begin{lemma}
Let $f$ be a network of size $k_1$ and $f^*$ be a network of size
$k_2$. Assume that $\textrm{err}(f^*)=0$. 
Then, there exists a network $g$ of size $k_1+k_2 + 1$ such that
$\forall i,
|\ell_i(f)-\ell_i(g)|\le 1$ and  \eqref{eqn:obj} is at most $-1/2$. 
\end{lemma}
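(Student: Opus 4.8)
The plan is to exhibit an explicit $g$ built from $f$ and $f^*$ and then verify both claims by direct substitution. First I would rewrite \eqref{eqn:obj} in terms of the per-example quantity $\delta_i := \ell_i(f)-\ell_i(g) = y_i\bigl(g(x_i)-f(x_i)\bigr)$, so that $\ell_i(g)-\ell_i(f) = -\delta_i$ and \eqref{eqn:obj} becomes $-\sum_i D_i \delta_i + \tfrac{1}{2}\sum_i D_i \delta_i^2$. The feasibility constraint is exactly $\delta_i \le 1$. Since the scalar map $\delta \mapsto -\delta + \tfrac{1}{2}\delta^2$ is convex with its global minimum at $\delta = 1$ (value $-\tfrac{1}{2}$), the natural target is $\delta_i = 1$ for every $i$; if this is achieved simultaneously then, using $\sum_i D_i = 1$, the objective equals $-1 + \tfrac{1}{2} = -\tfrac{1}{2}$ exactly.

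Next I would realize $\delta_i = 1$ for all $i$ via the construction $g(x) = f(x) + \sign(f^*(x))$. Because $\textrm{err}(f^*)=0$, and in the realizable setting $y_i f^*(x_i) \ge 1 > 0$, we have $\sign(f^*(x_i)) = y_i$ for every training example, so $g(x_i) - f(x_i) = y_i$ and hence $\delta_i = y_i \cdot y_i = 1$. This choice simultaneously satisfies the constraint $\ell_i(f)-\ell_i(g) = 1 \le 1$ (with equality) and drives \eqref{eqn:obj} to precisely $-1/2$, which is the required bound.

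Finally I would account for the size: evaluating $g$ uses one copy of $f$ (size $k_1$), one copy of $f^*$ (size $k_2$), and a single additional unit that applies $\sign$ to the output of $f^*$ and adds it to the output of $f$, for a total of $k_1 + k_2 + 1$, matching the statement.

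The main obstacle I anticipate is not the inequality, which is exact once the construction is fixed, but the modeling point that $\sign$ (or a sufficiently close surrogate) is implementable by a single extra unit inside the admissible architecture $\cH$. If $\cH$ only permits smooth activations, I would argue that a sufficiently steep sigmoid reproduces $\sign(f^*(x_i)) = y_i$ to arbitrary accuracy on the finite training set, so each $\delta_i$ can be pushed within any tolerance of $1$ and the value of \eqref{eqn:obj} stays within a negligible slack of $-1/2$. This representability step is the only delicate part; the remaining algebra is immediate.
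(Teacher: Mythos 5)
Your proposal is correct and follows essentially the same route as the paper: both add (one unit combining) a copy of the target network to $f$ so that $y_i\bigl(g(x_i)-f(x_i)\bigr)=1$ for every $i$, making each per-example term equal $-1+\tfrac12=-\tfrac12$ and hence \eqref{eqn:obj} exactly $-1/2$. The only difference is cosmetic but in your favor: the paper writes $g(x_i)=f(x_i)+f^*(x_i)$ and tacitly identifies $f^*(x_i)$ with $y_i$, whereas your use of $\sign(f^*(x))$ handles the case where the margin condition $y_i f^*(x_i)\ge 1$ holds with $|f^*(x_i)|>1$.
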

\begin{proof}
Choose $g$ s.t. $g(x_i) = f(x_i) + y_i = f(x_i)+f^*(x_i)$. Clearly,
the size of $g$ is $k_1+k_2+1$. In addition, for every $i$,
\[
-y_i g(x_i)+y_i f(x_i) = -y_i^2 = -1 ~.
\]
This implies that \eqref{eqn:obj} becomes
\[
- \sum_i D_i + \frac{1}{2} \sum_i D_i = -\frac{1}{2} \sum_i D_i = -\frac{1}{2} ~.
\]
\end{proof}

The above lemma tells us that at each iteration of SelfieBoost, it is
possible to find a network for which the objective value of
\eqref{eqn:upperBoundinproof} is at most $-1/2$. Since SGD can rather
quickly find a network whose objective is bounded by a constant from
the optimum, we can use, say,  $\rho = 0.1$ and expect that SGD will find a
network with \eqref{eqn:upperBoundinproof} smaller than $-\rho$.

Observe that when we apply SGD we can either sample an example
according to $D$ at each iteration, or sample $n$ indices according to
$D$ and then let the SGD sample uniformly from these $n$ indices. 

\paragraph{Remark:} The above lemma shows us that we might need to
increase the network by the size of $f^*$ at each SelfieBoost
iterations. In practice, we usually learn networks which are
significantly larger than $f^*$ (because this makes the optimization
problem  SGD solves easier). Therefore, one may expect that even
without increasing the size of the network we'll be able to find a new
network with \eqref{eqn:upperBoundinproof} being negative.

\bibliographystyle{plainnat}
\bibliography{bib}

\end{document}